\newtheorem{observation}{Observation}
\begin{document}
\title{On Conforming and Conflicting  Values}
%
%
\author{Kinzang Chhogyal\inst{1} \and
Abhaya Nayak\inst{1} \and
Aditya Ghose\inst{2} \and
Mehmet Orgun \inst{3}
}
\authorrunning{K. Chhogyal et al.}
%
\institute{Macquarie University, Sydney, Australia\\
\email{kin.chhogyal@mq.edu.au, abhaya.nayak@mq.edu.au} \and
University of Wollongong, Wollongong, Australia \\
\email{aditya@uow.edu.au }\\ \and
Macquarie University, Sydney, Australia \\
\email{mehmet.orgun@mq.edu.au}
}
\maketitle              
\begin{abstract}
Values are things that are important to us. Actions activate values - they either go against our values or they promote our values. Values themselves can either be conforming or conflicting depending on the action that is taken. In this short paper, we argue that values may be classified as one of two types - \emph{conflicting} and \emph{inherently conflicting} values.  They are distinguished by the fact that the latter in some sense can be thought of as being independent of actions. This allows us to do two things: i)  check whether a set of values is consistent and ii) check whether it is in conflict with other sets of values.    

\keywords{Values  \and Conflicting Values \and Value States.}
\end{abstract}
\section{Introduction}

\noindent The pervasiveness of AI in society has benefitted us but has also spurred on much public debate about important issues including ethics and trust in AI systemts. This has resulted in a growing interest in the research on values such as in \emph{value sensitive design} \cite{friedman2013value} where systems are designed by identifying values that are important and then translating them into design requirements \cite{van2013translating}. Others have focussed on the use of values in argumentation \cite{bench2009abstract}, on their relation with norms \cite{Serramia:2018:MVN:3237383.3237891,ghose2012norms}, in characterising opportunistic propensity \cite{10.1007/978-3-319-71682-4_13}, and in plan selection for BDI agents \cite{ijcai2017-26}. However, in most of these works except  \cite{ijcai2017-26}, the treatment of values is limited - they are cast as abstract entities, usually with some preference ordering - and the research is more on the use of values as a means to an end rather than on the values themselves. It seems then that we must try to get a deeper understanding of values and in this paper, we look at one particular aspect of values.\\

\noindent In Psychology, Schwartz's \emph{Theory of Basic Human values} \cite{schwartz2012overview} paints a richer picture of values. It is assumed that all values share certain features. Of them, three are particularly relevant: i) values can be activated and cause emotions to arise, ii) they can influence our goals and therefore the choice of our actions and, iii) they may have a preference ordering. He also identifies ten broad (abstract) values under which it is assumed most concrete values fall. What is of most interest to us is the dynamics of values; it is stated that  \emph{actions in pursuit of any value have consequences that conflict with some values but are congruent with others} \cite{schwartz2012overview}. This paper is motivated by that particular statement. We begin by presenting a simple formalisation that captures what it means for values to conform or conflict with each other when actions are executed. We show that adopting this formalisation, leads us to special pairs of values that are always in conflict which we call \emph{inherently conflicting} values and we end by briefly discussing some implications of this work.

\section{Values}

\noindent We assume there is a set of all values, $\mathcal V = \{a, b, \hdots\}$, from which agents draw their values. These values represent concrete values which can be thought to fall under the ten broad values \cite{schwartz2012overview}. We also assume there is a set of possible actions $\mathcal A =\{a, a', \hdots \}$ that agents can execute. Let $\mathcal{S}$ be the set of states that the world can be in and by $S(a)$ we denote a subset of $\mathcal{S}$ where $a$ is executable.\footnote{The symbol $a$ can represent both a value or an action but it is usually clear from the context what $a$ is referring to.}

\subsection{Conformance and Conflict}

We begin with the definition of a value state which is inspired by the one in \cite{ijcai2017-26}.
\begin{definition}
Give a value $v$, the value state of $v$ is denoted as $VS(v)$ where $VS(v) \in \mathbb{N}^+$.\footnote{The actual representation of the value state is not important and neither are the bounds. What is important is that value states can increase and decrease.}
\end{definition}

\noindent We said in the Introduction that one of the properties of values is that they can be activated and stir up emotions \cite{schwartz2012overview}. Actions have the potential to activate values which results in an increase or decrease in their value state. We say potential because the state under which the action is executed determines which and how many values are activated. In some states under certain actions, all values might get activated and in some few or none of the values might get activated. An action that causes the value state of a value to increase is interpreted as one that \emph{promotes} the value where as if it decrease the value state, it \emph{acts against} the value.  We use the following notation to show how the value state of a value $v$ changes given an action $a$ and a state $s \in S(a)$:

\begin{equation}
\begin{aligned}
(s, a, v) \rightarrow v^{*}
\end{aligned}
\end{equation}
where $*$ is $\uparrow$, $\downarrow$ or  $\leftrightarrow$ and indicates whether the value state of $v$ has increased, decreased or remained unchanged respectively. \\

\begin{definition}[Indifference]
A value $v$ is indifferent to an action $a$  in state $s \in S(a)$ if whenever $a$ is executed in $s$, $VS(v)$ remains unchanged.
\begin{equation}
\begin{aligned}
(s, a, v) \rightarrow v^{\leftrightarrow}
\end{aligned}
\end{equation}
\end{definition}

\noindent We take it that it is not possible to have a value $v$ that is indifferent to every action in every state. This would make $v$  meaningless and it shouldn't have been included as a value in the first place. We state it as the following condition:

\hspace{0mm}\\
\noindent \textbf{Condition-1}: For any value $v$, there must be at least one action $a$ and one state $s \in S(a)$ which activates $v$.
\begin{definition}[Conflicting Values]
Given an action $a$, a state $s \in S(a)$, and two values $v, v' \in \mathcal{V}$, if whenever $a$ is executed in $s$, $VS(v)$ increases (decreases) and $VS(v')$ decreases (increases) then we say $v$ and $v'$ are conflicting values with respect to $a$ and $s$.
\begin{equation}
\begin{aligned}
 & (s, a, v) \rightarrow v^{\uparrow} \text{ and } (s, a, v') \rightarrow v'^{\downarrow}  & or   \\
 & (s, a, v) \rightarrow v^{\downarrow} \text{ and } (s, a, v') \rightarrow v'^{\uparrow} &
\end{aligned}
\end{equation}
\end{definition}
\begin{example}
Consider that you value both \emph{frugality} and \emph{quality}. If you decide to buy a flimsy plastic table that costs \$100 over a sturdy wooden table that costs \$200, it increases the value state of frugality but decreases the value state of quality. Thus, they are conflicting values in this situation.
\end{example}
\begin{definition}[Conforming Values]
Given an action $a$, a state $s \in S(a)$, and two values $v$ and $v'$, if whenever $a$ is executed in $s$, $VS(v)$ increases (decreases) and $VS(v')$ also increases (decreases) then we say $v$ and $v'$ are conforming values with respect to $a$ and $s$.
\begin{equation}
\begin{aligned}
 & (s, a, v) \rightarrow v^{\uparrow} \text{ and } (s, a, v') \rightarrow v'^{\uparrow}  & or   \\
 & (s, a, v) \rightarrow v^{\downarrow} \text{ and } (s, a, v') \rightarrow v'^{\downarrow} &
\end{aligned}
\end{equation}
\end{definition}
\begin{example}
Again take the values of frugality and quality but this time you notice that the wooden table is being offered at a discount of 50\%. If you buy the wooden table, it will increase the value state of both frugality and quality. The two values are conforming in this situation.
\end{example}
\begin{definition}[Inherently Conflicting Values]
Given two values $v, v' \in \mathcal{V}$, if for all actions $a \in \mathcal{A}$ and for all states $s \in S(a)$, $v$ and $v'$ are conflicting values or if both $v$ and $v'$ are indifferent, we say $v$ and $v'$ are inherently conflicting values. We also say $v$ inherently conflicts with $v'$ and vice versa.
\end{definition}
\begin{example}
Consider that you value a \emph{free market economy} over a \emph{regulated economy}. If you are a legislator and you decide to support legislation that increases the tariff on imported goods, it decrease the value state of free market economy and increases the value state of regulated economy. In fact, any action that promotes one will go against the other and thus they are always in conflict.
\end{example}

\noindent Note that because of Condition-1 previously stated, it is not possible to have inherently conflicting values that are only indifferent and not conflicting. By conflicting we  mean values that are conflicting but not inherently conflicting unless otherwise stated.

\begin{definition}[Inherently Conforming Values]
Given two values $v$ and $v'$, if for all actions $a \in \mathcal{A}$ and for all states $s \in S(a)$, $v$ and $v'$ are conforming values or if both $v$ and $v'$ are indifferent, we say $v$ and $v'$ are inherently conforming values. We say $v$ inherently conforms with $v'$ and vice versa.
\end{definition}
%


%
\begin{observation}
Given two values $v'$ and $v''$  that are are inherently conforming and another value $v$:
\begin{itemize}
\item[a)] if $v' (v'')$  conflicts (conforms) with $v$, then  $v''(v')$ also conflicts(conforms) with $v$, and
\item[b)] if $v' (v'')$ inherently conflicts with $v$, then  $v''(v')$ also inherently conflicts with $v$.
\label{obs:twoInherentlyConformingValuesSame}
\end{itemize}
\end{observation}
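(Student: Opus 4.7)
The plan is to extract a simple synchronization fact from the definition of inherently conforming values, and then observe that this synchronization makes $v'$ and $v''$ interchangeable in every situation-based definition of conflict or conformance with any third value $v$. Both parts of the observation then reduce to unfolding definitions.

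First I would state explicitly the synchrony fact implicit in the definition: if $v'$ and $v''$ are inherently conforming, then for every $a\in\mathcal{A}$ and every $s\in S(a)$, the pair $(v',v'')$ either both rise, both fall, or both are unchanged. This is because the definition admits, at each $(a,s)$, either joint conformance (both $\uparrow$ or both $\downarrow$) or joint indifference, and these three possibilities are precisely the same-direction scenarios. For part (a), I would fix a witnessing $(a,s)$ for which $v'$ conflicts (respectively conforms) with $v$. At this pair $v'$ is not indifferent, so synchrony forces $v''$ to move in the same direction as $v'$, and Definition of conflict (respectively conformance) transfers verbatim from $v'$ to $v''$ against $v$ at this same $(a,s)$. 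Since the synchrony fact is symmetric in $v'$ and $v''$, the parenthetical form (swap $v'$ with $v''$) is handled simultaneously.

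For part (b), I would iterate the argument in part (a) across all $(a,s)$. Suppose $v'$ inherently conflicts with $v$; pick an arbitrary $a\in\mathcal{A}$ and $s\in S(a)$. By definition of inherent conflict, either $v'$ and $v$ conflict at $(a,s)$, in which case part (a) yields conflict of $v''$ with $v$ at $(a,s)$; or $v'$ and $v$ are jointly indifferent at $(a,s)$, in which case synchrony forces $v''$ to be indifferent too, so $v''$ and $v$ are jointly indifferent at $(a,s)$. Either branch matches the definition of inherent conflict between $v''$ and $v$.

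The main obstacle, such as there is one, lies in the disjunctive form of the definition of inherent conflict in part (b): I have to confirm that both the conflict branch and the joint-indifference branch survive the substitution of $v''$ for $v'$ at every $(a,s)$. The indifference branch is the easy-to-overlook one, but the synchrony fact handles it cleanly because $v'$ being indifferent at $(a,s)$ forces $v''$ to be indifferent there as well.
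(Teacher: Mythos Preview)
Your argument is correct. The synchrony fact you extract is exactly what the definition of inherently conforming values yields (at every $(a,s)$ the pair is jointly $\uparrow$, jointly $\downarrow$, or jointly $\leftrightarrow$), and from there both parts follow by the case analysis you describe; in particular you correctly handle the joint-indifference branch in part~(b), which is the only place one might slip.

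As for comparison: the paper states this result as an \emph{observation} and offers no proof at all, treating it as immediate from the definitions. Your write-up is therefore not an alternative route but simply a careful unpacking of what the paper leaves implicit; it matches the intended reading and could serve as the omitted justification.
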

\begin{observation}
Given two values $v'$ and $v''$  that are are inherently conflicting and another value $v$:
\begin{itemize}
\item[a)] if $v' (v'')$  conflicts (conforms) with $v$, then  $v''(v')$  conforms(conflicts) with $v$, and
\item[b)] if $v' (v'')$ inherently conforms with $v$, then  $v''(v')$ also inherently conflicts with $v$.
\end{itemize}
\end{observation}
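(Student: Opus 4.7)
The plan is to do a case analysis at a single action-state pair $(a,s)$ for part (a), and at each $(a,s)$ separately for part (b), exploiting the trichotomy that any value at a given $(a,s)$ is either indifferent, promoted, or acted against.

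For part (a), I would fix an $(a,s)$ witnessing that $v'$ conflicts with $v$. Then $VS(v')$ and $VS(v)$ move in opposite directions at $(a,s)$, so in particular $v'$ is not indifferent there. Since $v'$ and $v''$ are inherently conflicting, the definition forces that at $(a,s)$ either both are indifferent or they are conflicting; the first alternative is ruled out by the non-indifference of $v'$, so $v''$ must move opposite to $v'$ at $(a,s)$. Composing the two, $v''$ moves in the same direction as $v$, which is exactly the conformance of $v''$ with $v$ at $(a,s)$. The dual case, where $v''$ conforms with $v$ and I must show $v'$ conflicts with $v$, is handled symmetrically by the same trichotomy.

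For part (b), I would suppose $v'$ inherently conforms with $v$ and pick an arbitrary $(a,s)$. To verify that $v''$ inherently conflicts with $v$, I need to establish one of the two disjuncts of that definition at $(a,s)$. If $v'$ is indifferent at $(a,s)$, then inherent conformance between $v'$ and $v$ forces $v$ to be indifferent there, and inherent conflict between $v'$ and $v''$ forces $v''$ to be indifferent there, so both $v''$ and $v$ are indifferent and the ``both indifferent'' disjunct is satisfied. Otherwise $v'$ is strictly promoted or acted against at $(a,s)$; then $v$ moves in the same direction as $v'$ (by inherent conformance) and $v''$ moves opposite to $v'$ (by inherent conflict), so $v''$ and $v$ move in opposite directions, i.e.\ they conflict at $(a,s)$. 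Since $(a,s)$ was arbitrary, $v''$ inherently conflicts with $v$.

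The main obstacle will be the bookkeeping around the two escape clauses: both ``inherently conflicting'' and ``inherently conforming'' admit the ``both indifferent'' option at a given $(a,s)$, so every case split must verify that indifference of the mediating value $v'$ propagates coherently to both $v$ and $v''$. Once that is made explicit, both parts reduce to the slogans ``opposite of opposite is same'' (part a) and ``same composed with opposite is opposite'' (part b); Condition-1 plays no direct role because the reasoning is entirely pointwise in $(a,s)$.
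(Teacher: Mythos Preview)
The paper states this observation without proof, so there is no argument to compare against; your proposal is correct and supplies exactly the pointwise case analysis the paper leaves implicit. Your handling of the ``both indifferent'' escape clause is careful and necessary, and the two slogans you end with accurately summarise why the observation holds.
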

\begin{proposition}
If two values $v'$ and $v''$ are inherently conflicting with $v$, then $v'$ and $v''$ are inherently conforming.
\label{prop:inherentlyConflictingIsInherentlyConforming}
\end{proposition}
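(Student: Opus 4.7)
The plan is to fix an arbitrary action $a \in \mathcal{A}$ and state $s \in S(a)$ and show that at $(s,a)$ the pair $v'$, $v''$ satisfies one of the two clauses in the definition of inherent conforming (either both move in the same direction or both are indifferent). Since $(s,a)$ is arbitrary, this establishes the proposition. The case split will be driven by the behavior of the ``pivot'' value $v$ at $(s,a)$.

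\medskip

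\noindent\textbf{Case 1: $v$ is indifferent at $(s,a)$.} Because $v'$ is inherently conflicting with $v$, at this particular $(s,a)$ either $v'$ and $v$ are conflicting or both are indifferent. Being conflicting requires $v$ to strictly change, which contradicts indifference, so $v'$ must also be indifferent at $(s,a)$. By the same argument applied to the pair $(v'',v)$, $v''$ is also indifferent at $(s,a)$. Hence both $v'$ and $v''$ are indifferent, matching the ``both indifferent'' clause of inherent conforming.

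\medskip

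\noindent\textbf{Case 2: $v$ is activated at $(s,a)$.} Subdivide according to whether $VS(v)$ increases or decreases. If $(s,a,v)\rightarrow v^{\uparrow}$, then because $v'$ inherently conflicts with $v$ the only possibility left at $(s,a)$ is that they are conflicting, forcing $(s,a,v')\rightarrow v'^{\downarrow}$; the same reasoning gives $(s,a,v'')\rightarrow v''^{\downarrow}$. Thus $v'$ and $v''$ both decrease, i.e.\ they conform at $(s,a)$. The symmetric subcase $(s,a,v)\rightarrow v^{\downarrow}$ forces both $v'$ and $v''$ to increase, again giving conformance.

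\medskip

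\noindent Combining the two cases shows that for every $a \in \mathcal{A}$ and every $s \in S(a)$, $v'$ and $v''$ are either conforming or both indifferent, which is exactly the definition of inherently conforming. I do not expect any real obstacle here; the only subtle point is recognising that the ``indifferent'' clause in the definition of inherent conflict forces \emph{both} members of the pair to be indifferent simultaneously, which is what kills the otherwise awkward mixed subcase where $v$ is indifferent but $v'$ or $v''$ is not. Condition-1 plays no role in the argument, since the claim is about the behavior of $v'$ and $v''$ relative to each other at each $(s,a)$, not about the existence of activating actions.
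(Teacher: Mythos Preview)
Your proof is correct. The underlying idea is the same as the paper's---use the behavior of the pivot $v$ at a given $(s,a)$ to force $v'$ and $v''$ to move identically---but the packaging differs: the paper argues by contradiction (assume $v'$ and $v''$ fail to inherently conform at some $(s,a)$, then derive a contradiction from the inherent conflict of each with $v$), whereas you give a direct case split on whether $v$ is indifferent or activated at $(s,a)$. Your version has the advantage of explicitly disposing of the mixed subcase where one of $v',v''$ is indifferent and the other is not; the paper's contradiction argument only writes out the case where $v'$ and $v''$ point in opposite directions, leaving that mixed subcase implicit. Your closing remark about Condition-1 is also accurate: it is not needed for this proposition.
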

\begin{proof}
Assume the antecedent. For contradiction, assume $v'$ and $v''$ are not inherently conforming. So there must be an action $a$ and a state $s \in S(a)$ such that $(s, a, v') \rightarrow v'^\uparrow$ and $(s, a, v'') \rightarrow v''^\downarrow$ or $(s, a, v') \rightarrow v'^\downarrow$ and $(s, a, v'') \rightarrow v''^\uparrow$. However, since $v$ is inherently conflicting with $v'$ and $v''$, it must be that either $(s, a, v) \rightarrow v^\uparrow$ or $(s, a, v) \rightarrow v^\downarrow$ and both $v'$ and $v''$ must simultaneously either increase or decrease their value state, which results in a contradiction. $\Box$
\end{proof}

\noindent This could be contentious but it seems that representing two inherently conforming values separately doesn't offer us much; from Observation \ref{obs:twoInherentlyConformingValuesSame} we can see that whatever we can say of one value - with respect to the other values they conflict or confirm with - is true of the other.  This suggests that perhaps two inherently conforming values should be collapsed into one as they virtually represent the same value.  On the other hand,  if we have multiple values that are inherently in conflict with another value, then from Proposition \ref{prop:inherentlyConflictingIsInherentlyConforming}, we know they are inherently conforming, and again from Observation \ref{obs:twoInherentlyConformingValuesSame} we get the same argument for collapsing them into one. On the basis of this discussion, we introduce some further conditions that we assume to hold henceforth.

\hspace{0mm}\\
\noindent \textbf{Condition-2}: For any value $v$, if it inherently conforms with a value $v'$, then $v$ and $v'$ are the same. \\ \\
\noindent \textbf{Condition-3}: For any value $v$, if it inherently conflicts with $v'$, then $v'$ is the only one it inherently conflicts with.
\begin{example}
Let $\mathcal S =\{s1, s2\}$, $\mathcal A = \{a1, a2\}$ and let $V = \{ a, b, c, d \}$. Let $a1$ and $a2$ be executable in every state, i.e. $S(a1) = S(a2) = \mathcal{S}$. The dynamics of the values are shown in Table \ref{table-ex-1}. Values that are indifferent to actions are not shown in the state-action cells. We state some relations: 
\begin{enumerate}
\item $a$ and $b$ are inherently conflicting because in every state where $a^\uparrow$, $b^\downarrow$ or $a^\downarrow$, $b^\uparrow$ or $a^\leftrightarrow, b^\leftrightarrow$.
\item $a$ and $c$ are conflicting in $(s1, a')$ as $a^\uparrow, c^\downarrow$ and conforming in $(s1, a'')$ as $a^\downarrow, c^\downarrow$.
\item $a$ and $d$ are not inherently conflicting because in $(s1, a'')$ and $(s2, a'')$, we see $d^\uparrow$ and $d^\downarrow$ respectively, whereas $a^\leftrightarrow$ in both cases.
\end{enumerate}
\label{ex-1}
\end{example}
\vspace{-5mm}
\begin{table}[h]
\centering
\setlength{\tabcolsep}{5pt}
\renewcommand{\arraystretch}{1.5}
\begin{tabular}{| c| p{3cm} | p{3cm} |}
\hline
      & \textbf{state} $s1$                                                & \textbf{state} $s2$                                           \\ \hline
\textbf{action} $a'$  & $a \uparrow, b\downarrow, c\downarrow, d\downarrow$ & $a\downarrow, b\uparrow, c\downarrow, d\uparrow$ \\ \hline
\textbf{action} $a''$ & $c\downarrow, d\uparrow$                            & $d\downarrow$                                  \\ \hline
\end{tabular}
\caption{Conforming, Conflicting and Inherently Conflicting Values. Details in Ex. \ref{ex-1}.}
\label{table-ex-1}
\vspace{-5mm}
\end{table}

\noindent Given a set  $\mathcal A$ of possible actions and the set $\mathcal{S}$ of all states, we can divide the set of all values $\mathcal{V}$, into two sets: one set containing all pairs of inherently conflicting values and the second set consisting of the remaining values. The symbol  $\mathcal V^\perp \subseteq \mathcal{V}$ represents the set of all inherently conflicting values. From Condition-3, we know that for each value in $v \in \mathcal{V}^\perp$, $v$ has exactly one inherently conflciting $v' \in \mathcal{V}$ which we will denote as $\overline{v}$. Pairs of inherently conflicting values are special because even though we know they are activated by actions, since they are always in conflict or indifferent to any action, this allows us to talk about them without mentioning actions and in this sense they can be seen as being independent of actions. This perspective allow us to do two things: 
\begin{itemize}
\item[a)] given a set of values, it allows us to define what it means for that set to be consistent and,
\item[b)] given two sets of values, it allows us to define when one value set is in conflict with another. 
\end{itemize}
This would not be possible if we tried doing the same thing with conflicting values without also talking about the particular actions involved. 

\begin{definition}
A set of values $V$ is inconsistent iff there exists values $v, v' \in V$ such that $v, \ v' \in \mathcal{V}^\perp$ and $v' =  \overline{v}$. Otherwise, it is consistent.
\end{definition}
\begin{definition}
Two sets of values $V$ and $V'$ are conflicting iff there exists values $v \in V$ and $v' \in V'$ such that $v, v' \in \mathcal{V}^\perp$ and $v' =  \overline{v}$. Otherwise, $V$ and $V'$ are non-conflicting.
\end{definition}
\begin{example}
Let $\mathcal{V} = \{a, \overline{a}, b, c, \overline{c}, d,  \overline{d}, e, f\}$. We have $\mathcal{V}^\perp = \{a, \overline{a}, c, \overline{c}, d, \overline{d}\}$ . Also, let $V = \{a, b, c\}$, $V'=\{\overline{a}, d, e \}$ and $V'' = \{ d, \overline{d}, f\}$. We can say the following:
\begin{enumerate}
\item $V$ and $V'$ are consistent as neither contain inherently conflicting values. 
\item $V''$ is inconsistent as it contains both $d$ and $\overline{d}$.
\item $V$ and $V'$ are conflicting because of $a \in V$ and $\overline{a} \in V'$.
\item $V$ and $V''$ are non-conflicting.
\end{enumerate}
\end{example}

\section{Discussion and Conclusion}
We conclude this paper by briefly addressing two issues that result from our presentation:
\begin{enumerate}
\item When talking about the dynamics of values, aside from the examples that were provided, the definition of conforming, conflicting and inherently conflicting values was entirely based on actions and states with little mention of agents. This alludes to the idea that values could exist as separate entities outside the agent and yet we know without agents, values would be meaningless. We don't really see a problem to this separation and it has been done previously in other areas of AI. For instance, consider the various \emph{action languages} \cite{gel98} used for talking about the effects of actions. Even though actions are executed by agents, actions are generally talked about in terms of their pre-conditions and post-conditions without referring to any agent at all.
\item We said that there are inherently conflicting values and this bring up question of why don't all values have one that they inherently conflict with? If we start with a set of values $\mathcal{V}$, and there is a value $v$ that doesn't have an inherently conflicting value, there is nothing to stop us from introducing a new value  $\overline{v}$ in $\mathcal{V}$? The answer to us it seems is that pairs of inherently conflicting values correspond to values that we find in society that occur ``naturally'' and are in conflict. For example, take \emph{pro-choice} and \emph{pro-life}, values related to abortion - any action that promotes one is clearly going to go against the other. On the other hand, it is hard to think of a value like \emph{healthy lifestyle} as having an inherently conflicting value. If we try to construct one, we end up with what feels like an ``unnatural'' and ``artificial'' value like \emph{unhealthy lifestyle} which no person would hold.
\end{enumerate}

In this short paper, we presented a simple formalisation of the dynamics of values and argued that there might be certain pairs of values that are inherently in conflict. We hope this paper will stimulate further discussion on the meaning of conflicts between values and on the topic in general.

%
%

%
%


%
%
%
\newpage
 \bibliographystyle{splncs04}
 \bibliography{mybibliography}

\end{document}